\documentclass[10pt,letterpaper]{article}
\newcommand{\ours}{GradLev}
\usepackage{times}
\usepackage{amsmath,amssymb,amsthm,graphicx,booktabs,tabularx,xcolor,colortbl,tikz,float,placeins}
\usetikzlibrary{arrows.meta,positioning,calc,fit}
\usepackage{pgfplots}
\pgfplotsset{compat=1.18}
\usepackage{iclr2027_conference}
 
\newif\ifgradlevpreprint
\gradlevpreprinttrue
\ifgradlevpreprint
  \iclrfinalcopy
\fi
\usepackage{iftex}
\ifXeTeX
  \usepackage{fontspec}
\fi
\usepackage[colorlinks=true,linkcolor=accent,citecolor=accent,urlcolor=accent]{hyperref}
\definecolor{accent}{RGB}{24,95,99}
\hypersetup{pdftitle={\ours{}: Token-Parallel Test-Time Training via Costate Prediction}}
\ifgradlevpreprint
  \hypersetup{pdfauthor={Bo Liu and Qiang Liu}}
\else
  \hypersetup{pdfauthor={}}
\fi
\allowdisplaybreaks[1]

\newtheorem{theorem}{Theorem}
\newsavebox{\lmplotbox}
\newsavebox{\lmtablebox}
\newlength{\lmaxisheight}
\definecolor{primal}{HTML}{2878B5}
\definecolor{costate}{HTML}{7656A7}
\definecolor{prefill}{HTML}{C45B45}
\definecolor{deploy}{HTML}{237A78}
\definecolor{metricshade}{HTML}{DFE8F3}
\usepackage{caption}
\newcommand{\tableformat}{\small\renewcommand{\arraystretch}{1.23}\setlength{\tabcolsep}{4pt}}
\pgfplotsset{paperaxis/.style={
  scale only axis,width=5.35cm,
  axis lines*=left,axis line style={black!45,line width=.45pt},
  tick style={black!45,line width=.45pt},tick align=outside,
  major tick length=2pt,
  label style={font=\normalfont\fontsize{9}{10.5}\selectfont},
  tick label style={font=\normalfont\fontsize{8}{9}\selectfont},
  title style={at={(0,1)},anchor=south west,font=\normalfont\fontsize{9}{10.5}\selectfont,yshift=-8pt},
  xlabel style={yshift=2pt},ylabel style={yshift=-1pt},
  ymajorgrids=true,grid style={black!9,line width=.35pt},
  minor tick num=0,clip=true,
  /pgf/number format/1000 sep={}
},gradlevline/.style={color=primal,line width=1.35pt,no marks,line cap=round,line join=round},
  chunkline/.style={color=prefill,line width=1.2pt,no marks,dash pattern=on 3.6pt off 1.6pt,line cap=round,line join=round},
  staticline/.style={color=black!58,line width=1.25pt,no marks,densely dotted,line cap=round}}

\title{\ours{}: Token-Parallel Test-Time\\Training via Costate Prediction}
\author{Bo Liu \quad Qiang Liu\\
\normalfont The University of Texas at Austin\\
\normalfont\texttt{\{bliu,lqiang\}@cs.utexas.edu}}
\begin{document}
\maketitle
\ifgradlevpreprint\lhead{\small Preprint}\fi
\begin{abstract}
Test-time training (TTT) allows a model to improve its predictions at inference time by updating weights after every observed token. However, sequential gradient writes make parallel training difficult. We observe that, given layer inputs and activation gradients (costates), online gradient descent admits exact parallel scans for both forward evaluation and reverse backpropagation. \ours{} leverages this duality: a causal auxiliary network predicts costates across all tokens in parallel; associative scans compute the adapted weights and forward activations and propagate gradients backward; and the resulting gradient targets supervise the predictor via a consistency loss. Exact consistency guarantees exact recovery of the sequential online learner. At deployment, the auxiliary predictor is discarded, and the model updates natively via token-by-token forward and backward passes.
\end{abstract}

\vspace{-10pt}
\begin{figure}[H]
\centering
\begin{minipage}{\dimexpr\textwidth-2\leftmargini\relax}

{\centering
\resizebox{\linewidth}{!}{%

\begin{tikzpicture}[
  font=\small,>={Latex[length=1.7mm,width=1.2mm]},
  flow/.style={->,line width=.85pt,draw=black!55},
  stage/.style={rounded corners=3pt,line width=.8pt,minimum height=.98cm,
  minimum width=3.65cm,align=center,inner sep=4pt}]
\node[stage,draw=prefill!68!black,fill=prefill!3,text=prefill!90!black] (q) at (1.6,1.5)
  {\textbf{Predict with $Q$}\\[2pt]costates $\widehat g_t$};
\node[stage,draw=costate!68!black,fill=costate!3,text=costate!90!black] (p) at (6.1,1.5)
  {\textbf{Forward with $P$}\\[2pt]$\widehat y_t=\widehat W_t x_t = (W_0 + \widehat \Delta_t) x_t$};
\node[stage,draw=primal!68!black,fill=primal!3,text=primal!90!black] (r) at (10.6,1.5)
  {\textbf{Backprop with $P$}\\[2pt]costate targets $\widetilde g_t$};
\draw[flow] (q)--(p);
\draw[flow] (p)--(r);
\node[align=center,fill=white,inner xsep=7pt] (loss) at (6.1,.35)
  {$\mathcal L=\mathcal L_{\mathrm{CE}}(\widehat y)
  +\lambda\,\bigl\|{\color{prefill}\widehat g}-
  \operatorname{sg}({\color{primal}\widetilde g})\bigr\|_2^2$};
\draw[flow,draw=prefill] (q.south)--(1.6,.35)--(loss.west);
\draw[flow,draw=primal] (r.south)--(10.6,.35)--(loss.east);
\end{tikzpicture}}\par}

\caption{A prefiller $Q$ predicts surrogate output costates $\widehat{g}_t$. Model $P$ accumulates these gradient updates via a parallel prefix scan to compute predictions $\widehat{y}_t = \widehat{W}_t x_t$, where $\widehat{W}_t$ denotes the hypothetical test-time trained weight matrix at time $t$. Then we backpropagate through $P$ to get the grounded costate targets $\widetilde{g}_t$. Because exact test-time training requires $\widehat{g}_t = \widetilde{g}_t$, we jointly train $P$ and $Q$ with a costate consistency loss in addition to the normal cross-entropy objective.}
\label{fig:main}
\end{minipage}
\end{figure}

\section{Introduction}

Every incoming token in language modeling serves a dual role: it is both a prediction target and an immediate supervisory signal. Rather than leaving internal representations static, an adaptive model can assimilate each new observation by updating temporary weights before predicting the next token. This online adaptation principle underlies dynamic evaluation and test-time training (TTT) \citep{dynamic,ttt}. By decoupling persistent cross-sequence knowledge (slow weights) from local within-sequence adaptations (fast weights) \citep{fastweights,ba}, test-time optimization provides a natural mechanism for tracking long-range document context, bursty entity bindings, and local distribution shifts. Meta-learning grounds this process by training the initialization and update rule so that rapid adaptation directly benefits future predictions \citep{maml,fastlm,tttlayers,e2e}.

The fundamental bottleneck is parallelism. Standard Transformer training evaluates all sequence positions concurrently via causal attention \citep{transformer}. In contrast, token-level test-time backpropagation introduces a strict serial dependency: the prediction at position $t$ requires the adapted weights $\theta_t$, which depend on the gradient at $t-1$, which in turn requires the unrolled activations and predictions from $t-1$. To bypass this sequential dependency, existing methods either coarsen updates into multi-token chunks \citep{tttlayers,inplace,lact}—sacrificing immediate reactivity—or restrict updates to simple associative recurrences found in linear attention and state-space models (SSMs) \citep{linear,gla,mamba2,longhorn}. Training an expressive model that performs a full forward pass, backward pass, and weight update at \emph{every single token} while retaining sequence-wide parallel training remains an open challenge.

In this work, we introduce \ours{}, a method that achieves fully parallel training for token-level test-time backpropagation through a \emph{lifted} optimization framework \citep{maglev}. Instead of sequentially computing gradient writes position by position, an auxiliary causal network $Q$ predicts the \emph{costates}—the gradients of the loss with respect to the outputs of the adapted layers—across all tokens simultaneously. Conditioned on these predicted costates and the layer inputs, online gradient descent decouples into an affine matrix recurrence. This structure allows the primary learner $P$ to evaluate its forward states, adapted predictions $\widehat{y}_t$, and transposed backward passes concurrently across the entire sequence via efficient parallel prefix scans.

To train the predictor without human annotations or serial unrolling, model $P$ retraces backpropagation through the scanned weights to generate target costates $\widetilde{g}_t$. The entire architecture is optimized end-to-end with a joint objective: task cross-entropy on the adapted outputs and a consistency loss that aligns predicted costates $\widehat{g}_t$ with their retraced targets $\widetilde{g}_t$. These targets are exact gradients on the proposed trajectory. Training stops their outer derivatives and uses the first-order rule described below. At zero consistency error, the parallel training trajectory matches the causal online trajectory. At test time, the predictor $Q$ is discardedand $P$ updates online by writing exact per-token gradients.

\section{Test-Time Backpropagation}
\label{sec:framework}
Test-time backpropagation uses each observed target to improve later predictions. For tokens $s_0,\ldots,s_T$, let $\theta_t$ be the parameters used to predict $s_{t+1}$ from the prefix $s_{\leq t}=(s_0,\ldots,s_t)$. Its negative log-probability is $\ell_t(\theta_t)$, with the prefix and historical cache implicit. An update rule $U$ adapts the model from its learned initialization $\theta_0$:
\begin{equation}
 \theta_{t+1}=U\!\left(\theta_t,\nabla_\theta\ell_t(\theta_t)\right),
 \qquad
 \min_{\theta_0}\;\mathbb E\!\left[\frac1T\sum_{t=0}^{T-1}\ell_t(\theta_t)\right].
 \label{eq:framework-update}
\end{equation}
The expectation is over sequences. Each target is scored before updating, and its test-time gradient holds earlier updates and cached activations fixed. Learning the initialization and update rule is a meta-learning problem \citep{maml,bilevel}, connecting dynamic evaluation and TTT \citep{dynamic,ttt} with adaptive language modeling \citep{fastlm,tttlayers,e2e}. In practice, one could select a subset of $\theta_0$ as the fast weights to adapt online. Slow weights of $\theta_0$, including learned coefficients of $U$, are trained by the outer objective and remain fixed at deployment.

\section{\ours{}}
\label{sec:construction}
\ours{} replaces sequential gradient computation during training with three stages: costate prediction with $Q$, forward evaluation with $P$, and backward reconstruction with $P$. Following Maglev's lifted training principle \citep{maglev}, $Q$ proposes quantities that $P$ computes itself at deployment; a consistency loss aligns the proposals with those computations.

\paragraph{Observation: known costates expose exact scans.}
Consider $y_t=W_tx_t$, with $x_t\in\mathbb R^d$, $y_t\in\mathbb R^m$, and slow initialization $W_0\in\mathbb R^{m\times d}$. The costate $g_t=\nabla_y\ell_t(y_t)$ differentiates through subsequent layers to the prediction loss. Within each reset segment, use step size $\eta_t\geq0$, retention $\alpha_t\in[0,1]$, and \textbf{centered weight decay}:
\begin{equation}
\begin{aligned}
 W_{t+1}&=W_0+\alpha_t(W_t-W_0)-\eta_tg_tx_t^\top,\\
 \Delta_t:=W_t-W_0
   &=-\sum_{i=0}^{t-1}\eta_i
      \left(\prod_{j=i+1}^{t-1}\alpha_j\right)g_ix_i^\top,
      \qquad \Delta_0=0.
\end{aligned}
\label{eq:general-expanded}
\end{equation}
Positions start at zero; empty sums are zero and empty products are one. Each past write is discounted by subsequent retentions, with decay centered at $W_0$.

\textbf{Given inputs, costates, and gates, gradient descent is an exact parallel scan.} Forward reads $W_tx_t$ and backward reads $W_t^\top g_t$ reuse these writes, so the predictor need only supply costate vectors.

\paragraph{1. Costate Prediction with $Q$.}
The prefiller predicts $\widehat g_t\in\mathbb R^m$ for each adapted map from causal features before the adapted subnetwork. It may observe the current target $s_{t+1}$: its proposal writes only to weights used at later positions. All proposals are computed in parallel across tokens.

\paragraph{2. Forward with $P$.}
Replace $g_i$ by $\widehat g_i$ in Equation~\ref{eq:general-expanded}. An exclusive scan, which includes only writes at $i<t$, computes
\begin{equation}
 \widehat\Delta_{t+1}=\alpha_t\widehat\Delta_t-\eta_t\widehat g_tx_t^\top,
 \qquad
 \widehat y_t=(W_0+\widehat\Delta_t)x_t.
 \label{eq:general-scan}
\end{equation}
Initialize $\widehat\Delta_0=0$ and read before each write. For multiple adapted layers, $x_t$ is each layer's input on this proposed trajectory. Layers run in forward order, with token positions evaluated in parallel within each layer. The negative sign belongs to the update; $\widehat\Delta_t$ is added to $W_0$.

\paragraph{3. Backward Reconstruction with $P$.}
Start at the prediction loss and work backward through the layers. Suppose the higher layers have supplied the output costate $\widetilde g_t=\partial\ell_t/\partial\widehat y_t$ at the current map. Holding its incoming weights and historical caches fixed, pass the gradient to its input:
\begin{equation}
 \widetilde g_t^{\,x}:=\frac{\partial\ell_t}{\partial x_t}
   =(W_0+\widehat\Delta_t)^\top\widetilde g_t
   =W_0^\top\widetilde g_t+\widehat\Delta_t^\top\widetilde g_t.
 \label{eq:general-reverse}
\end{equation}
The last term is a transpose scan over the \emph{same historical writes} $(x_i,\widehat g_i)$ used in the forward, queried by the incoming $\widetilde g_t$. If $x_t=\sigma(z_t)$ follows an elementwise activation, pass $\widetilde g_t^{\,z}=\sigma'(z_t)\odot\widetilde g_t^{\,x}$ to the preceding linear layer; $z_t$ is its preactivation and $\odot$ denotes elementwise multiplication. Repeating these steps gives every layer's exact costate on the proposed trajectory.

\paragraph{Training objective.}
Combine the prediction loss with a consistency loss that matches each proposal $\widehat g_t$ to its reconstructed costate $\widetilde g_t$:
\begin{equation}
 \mathcal L=\frac1T\sum_{t=0}^{T-1}\left[
 \ell_t(\widehat y_t)+\frac{\lambda}{m}
 \bigl\|\widehat g_t-\operatorname{sg}(\widetilde g_t)\bigr\|_2^2\right].
 \label{eq:general-objective}
\end{equation}
Here $\lambda\geq0$ weights consistency and $\operatorname{sg}$ is the stop-gradient operator. For multiple maps, average their dimension-normalized consistency losses. The three stages construct this objective; the optimizer's backward pass is separate. At deployment, remove $Q$ and update $P$ from its own costates after scoring each target.

\begin{theorem}[Causal consistency and exact recovery]
\label{thm:consistency}
Fix a finite input-target sequence and a deterministic causal network with acyclic within-token computation and finite local derivatives. Training and deployment share initialization, resets, causal gates, and update rules; reads precede writes, and the test-time reverse holds incoming weights and historical caches fixed. Let $G=(G_0,\ldots,G_{T-1})$ collect all proposed costates by token, and let $F(G)$ collect the reconstructed targets. Each target block $F_t$ depends only on $G_{<t}:=(G_0,\ldots,G_{t-1})$. Consequently, $F$ has a unique fixed point $G^\star$, the serial learner's costate trajectory:
\[
 \widehat G=F(\widehat G)\quad\Longleftrightarrow\quad
 \widehat G=G^\star.
\]
At this solution, the parallel and serial matrices, activations, and test-time gradients coincide in exact arithmetic.
\end{theorem}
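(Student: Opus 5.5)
The proof establishes strict causality of $F$ first, then solves the fixed-point equation by forward substitution, and finally shows by induction that the unique solution reproduces the serial learner.

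\textbf{Step 1: strict causality.} We show that $F_t$ depends only on $G_{<t}$. By Equation~\ref{eq:general-scan}, the exclusive scan makes the adapted matrix at position $t$ in each map, $W_0+\widehat\Delta_t$, a function of the writes $(x_i,G_i)$ for $i<t$ only. Here ``reads precede writes'' is used. Next, induct over layers in forward order, which is possible because within-token computation is acyclic. The layer input $x_t$ at token $t$ depends on token-$t$ data, the historical caches, and the weights of earlier layers at position $t$. The caches are functions of earlier positions' activations, hence of $G_{<t}$. Each earlier layer's weights at $t$ are built from $G_{<t}$ and from inputs $x_i$, $i<t$, which by induction on $t$ also depend only on $G_{<t}$. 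So every activation at position $t$ is a function of $G_{<t}$.

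The reverse pass in Equation~\ref{eq:general-reverse} then starts from $\ell_t$. It passes gradients through $(W_0+\widehat\Delta_t)^\top$ and through local derivatives $\sigma'(z_t)$, which are finite by assumption. Incoming weights and historical caches are held fixed, so no gradient flows into $G_t$ or into later positions. Hence $F_t=F_t(G_{<t})$. \emph{This is the main obstacle.} It requires careful bookkeeping that the cache and weight dependencies are genuinely exclusive, with the proposal $G_t$ itself never entering its own target. That is exactly what the read-before-write convention and the stop at incoming weights guarantee. The induction should be stated jointly over (token, layer) pairs in lexicographic order.

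\textbf{Step 2: existence and uniqueness of the fixed point.} Strict causality makes $\widehat G=F(\widehat G)$ solvable by forward substitution. $F_0$ is a constant, so $G_0^\star:=F_0$ is forced. Given $G^\star_{<t}$, the component $G_t^\star:=F_t(G^\star_{<t})$ is forced. Induction on $t$ over the finite horizon gives existence and uniqueness. Equivalently, $F$ is strictly lower-block-triangular in its dependencies, so $F^{T}$ is constant and any fixed point equals it.

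\textbf{Step 3: identification with the serial learner.} We show by induction on $t$ that the serial learner's weights, activations, and costates at $t$ equal those of the parallel trajectory driven by $G^\star$. The base case holds because both start from $W_0$ with $\Delta_0=0$, under the same resets and gates. For the inductive step, assume the costates agree for $i<t$. The serial recurrence in Equation~\ref{eq:general-expanded} and the scan produce identical $\Delta_t$ in every map, because they are the same affine recurrence with the same inputs, gates, and costates. The forward computation at $t$ is then identical, which gives the same activations and $\widehat y_t$. The serial test-time gradient at $t$ is computed by the same reverse with the same fixed weights and caches, so it equals $F_t(G^\star_{<t})=G^\star_t$, which closes the induction. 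Determinism and exact arithmetic ensure the two computations agree as functions rather than merely in distribution.

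For the converse direction of the equivalence: the serial costate trajectory satisfies $G=F(G)$ by this same computation, and Step 2's uniqueness then identifies it with $G^\star$.
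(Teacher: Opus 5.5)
Your proposal is correct and follows the paper's own argument: exclusive reads make $F_t$ depend only on $G_{<t}$, forward substitution then gives the unique fixed point, and the serial learner satisfies the same recursion. You supply the bookkeeping that the paper's three-line proof leaves implicit, namely the joint token--layer induction for strict causality and the explicit induction identifying the fixed point with the serial trajectory.
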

\begin{proof}
An incoming matrix at token $t$ uses only writes at $i<t$, so the forward and reverse at $t$ depend only on $G_{<t}$. Thus $G_0^\star=F_0(\varnothing)$ and $G_t^\star=F_t(G_{<t}^\star)$ recursively determine the unique solution. The serial learner obeys this recursion and the same update rule.
\end{proof}
With fixed update coefficients and detached gradient writes, exact consistency also recovers the first-order model-agnostic meta-learning (MAML) slow-weight gradient for the same per-token objective \citep{maml}. Our implementation additionally learns the gates and retains their derivatives. The theorem guarantees exact test-time gradients, not the full meta-gradient. Nonzero consistency errors can be amplified (Appendix~\ref{sec:consistency-bound}).

\paragraph{Practical architecture and training.}
A learned affine gate $f:\mathbb R^d\to\mathbb R^2$ maps input $x_t\in\mathbb R^d$ to scalar logits. For matrix $j$ with input width $d_j$ and a fixed write-strength cap $\mu_{\max}>0$, use
\[
 (a_t,b_t)=f\!\left(\operatorname{sg}(x_t)\right),\qquad
 \alpha_t=\operatorname{sigmoid}(a_t),\qquad
 \eta_{j,t}=\frac{\mu_{\max}}{d_j}\operatorname{sigmoid}(b_t).
\]
In practice, one could share the retension $\alpha_t$ and learning rate $\eta_{j, t}$ across different layers from a shared input activation $x_t$.

Applying \ours{} on top of architectures with Attention requires special care. Let $q_t, k_t, v_t$ be the current query, key, and value, and $K_{<t},V_{<t}$ the past kv-caches. The test-time reverse differentiates the attention output
\[
 o_t=\operatorname{Attn}\!\left(q_t,
 [\operatorname{sg}(K_{<t});k_t],
 [\operatorname{sg}(V_{<t});v_t]\right).
\]
Here $\operatorname{Attn}$ is scaled dot-product attention and $[\,;\,]$ concatenates rows. Past cache entries are read unchanged. As a result, for the step-3 backpropagation through $P$, one should modify the reverse computation graph such that we treat the previous kv-cache as fixed constants.

\section{Related work}
\label{sec:related}
This section provides a brief overview of existing works related to \ours{} in online adaptation, fast weights, and parallel recurrent training.

\paragraph{Fast weights and learning to learn.}
Fast-weight programmers separate persistent parameters from temporary, input-dependent weights \citep{fastweights}; later work uses fast matrices to associate recent activations \citep{ba}. Learned optimizers parameterize the update rule \citep{learnopt}, while model-agnostic meta-learning (MAML) and bilevel optimization learn through adaptation \citep{maml,bilevel}. \ours{} learns the initialization and gates of an explicit gradient-descent rule. Related work interprets in-context learning as optimization: \citet{schlag} connect linear Transformers to fast-weight programming, and \citet{iclgd} study gradient descent within linear attention. \ours{} instead differentiates its next-token loss at \mbox{deployment to update selected weights.}

\paragraph{Test-time training and adaptive language models.}
Dynamic evaluation updates language models on observed context \citep{dynamic}, and TTT learns from test-instance supervision \citep{ttt}. Recent theory studies when test-time gradient steps help linear Transformers under task and distribution mismatch \citep{ttttheory}. Fast Weight Layers add a final adaptive module and express gradient writes evaluated at slow weights as linear attention \citep{fastlm}. TTT layers make a learned linear model or multilayer perceptron (MLP) the recurrent state \citep{tttlayers}. End-to-End TTT meta-trains next-token-loss adaptation \citep{e2e}, while In-Place TTT adapts existing MLP down-projections with an aligned objective \citep{inplace}. Large Chunk Test-Time Training (LaCT) uses large update chunks for hardware efficiency and nonlinear fast-weight capacity \citep{lact}. \ours{} parallelizes training through costate prediction while \mbox{preserving token-level updates at deployment.}

\paragraph{SSMs and memory as online learners.}
Longhorn treats SSM layers as meta-learned online learners. Pretraining learns input representations; inference updates associative memory with a diagonal approximation to an implicit regression update \citep{longhorn}. Titans adds neural memory with learned retention and momentum \citep{titans}. Miras organizes memory design by architecture, internal objective, retention, and optimization rule \citep{miras}. Atlas optimizes memory over current and past context \citep{atlas}, while MesaNet solves regularized regression using chunkwise conjugate gradients \citep{mesanet}. \ours{} uses the predictive loss to adapt selected maps and reconstructs their \mbox{forward and reverse with scans.}

\paragraph{Efficient matrix recurrences.}
Linear attention gives a recurrent matrix representation of attention \citep{linear}. The Structured State Space sequence model (S4) supports efficient long-range computation \citep{s4}, Mamba makes state transitions input-dependent \citep{mamba}, and state-space duality unifies efficient recurrent and chunkwise computation \citep{mamba2}. Mamba-3 further develops discretization, complex-valued state updates, and multi-input, multi-output states \citep{mamba3}. Gated linear attention (GLA) provides data-dependent forgetting with hardware-efficient training \citep{gla}; DeltaNet and Gated DeltaNet support error-correcting associative writes \citep{deltanet,gdn}. \ours{} reuses gated additive scans \citep{fla} with negative costate writes, rather \mbox{than Gated DeltaNet's associative-regression residual.}

\paragraph{Parallel training of nonlinear recurrence.}
Associative scans parallelize linear recurrences \citep{parallelrnn}. Parallel nonlinear equation solving, the nonlinear differential equation as fixed-point iteration (DEER) framework, and later stable solvers recover trajectories through iterative updates across positions \citep{song,deer,stable}. The Latent Recurrent Transformer (LRT) trains previous-token latent recurrence through parallel refinement \citep{lrt}. Other methods learn auxiliary trajectories: NextLat predicts latent transitions \citep{nextlat}, and Supervised Memory Training learns recurrent neural network (RNN) transitions from Transformer-generated predictive states \citep{smt}. Maglev uses a causal prefiller and consistency loss to train a decoder that produces its own memories at inference \citep{maglev}. \ours{} predicts costates, whose writes support forward and transpose scans. This relates to synthetic gradients \citep{synthetic}, but the predictions here are supervised by a full reverse and removed at deployment. Training constructs the trajectory in \mbox{three stages without iterative refinement.}

\section{Experiments}
\label{sec:experiments}
In this section, we provide two preliminary experiments to demonstrate the usefulness of \ours{}.
Section~\ref{sec:toy} asks whether the scans reproduce a token-level MLP learner given exact costates. Section~\ref{sec:lm} evaluates learned costate prediction through the deployment quality of a 300M language model, adapting only its final MLP.

\subsection{Recovering token-level TTT in an MLP}
\label{sec:toy}
This experiment checks whether parallel forward and backward scans reproduce the outputs and gradients of serial TTT. It tests the construction at exact consistency, independently of learning $Q$.

\paragraph{Experimental setup.}
Each sequence contains $T=32$ independent input--target pairs, with $x_t\sim\mathcal N(0,I_4)$ and $c_t\sim\mathcal N(0,I_3)$; $I_k$ is the $k\times k$ identity. A two-layer MLP predicts
\begin{equation}
 y_t=W_{2,t}\tanh(W_{1,t}x_t),\qquad
 \ell_t(y_t)=\tfrac12\|y_t-c_t\|_2^2,
 \label{eq:toy}
\end{equation}
where $W_{1,t}\in\mathbb R^{7\times4}$ and $W_{2,t}\in\mathbb R^{3\times7}$. Serial autograd updates both matrices after each token. Its costates are supplied to one parallel forward/backward construction, with the same initialization and update rule. A control replaces only the adapted transpose $W_{2,t}^\top$ in the backward by $W_{2,0}^\top$. Figure~\ref{fig:toy} reports maximum absolute discrepancies over coordinates, tokens, and three seeds in 64-bit floating-point arithmetic (FP64). Appendix~\ref{sec:toy-protocol} specifies initialization and gates.

\begin{figure}[H]\centering
\begin{tikzpicture}
\begin{axis}[paperaxis,width=8.5cm,height=2.0cm,
  xmode=log,xmin=3e-17,xmax=30,xtick={1e-16,1e-12,1e-8,1e-4,1},
  ymin=-.38,ymax=2.38,ytick={0,1,2},
  yticklabels={First-matrix gradients,Hidden costates,Outputs},
  xlabel={Maximum absolute error (log scale)},ylabel={},
  ymajorgrids=false,xmajorgrids=true,
  legend style={at={(0,1.13)},anchor=south west,draw=none,
    font=\small,legend columns=2,inner sep=0pt,
    /tikz/every even column/.append style={column sep=14pt}},
  legend cell align=left,clip=false]
\addplot[black!18,line width=.65pt,forget plot] coordinates {(2.22044604925e-16,1) (.375650078709,1)};
\addplot[black!18,line width=.65pt,forget plot] coordinates {(6.66133814775e-16,0) (.626057138681,0)};
\addplot[only marks,primal,mark=*,mark size=2.2pt] coordinates {(1.17961196366e-16,2) (2.22044604925e-16,1) (6.66133814775e-16,0)};
\addlegendentry{Adapted transpose}
\addplot[only marks,prefill,mark=square,mark size=3.0pt,line width=.85pt] coordinates {(1.17961196366e-16,2) (.375650078709,1) (.626057138681,0)};
\addlegendentry{Slow transpose only}
\node[anchor=west,font=\scriptsize,fill=white,inner sep=1.5pt] at (axis cs:6e-16,2) {$1.18\!\times\!10^{-16}$ (both)};
\node[anchor=west,font=\scriptsize,text=primal,fill=white,inner sep=1.5pt] at (axis cs:9e-16,1) {$2.22\!\times\!10^{-16}$};
\node[anchor=west,font=\scriptsize,text=primal,fill=white,inner sep=1.5pt] at (axis cs:2.5e-15,0) {$6.66\!\times\!10^{-16}$};
\node[anchor=west,font=\scriptsize,text=prefill,fill=white,inner sep=1.5pt] at (axis cs:1.1,1) {$0.376$};
\node[anchor=west,font=\scriptsize,text=prefill,fill=white,inner sep=1.5pt] at (axis cs:1.8,0) {$0.626$};
\end{axis}
\end{tikzpicture}
\caption{\textbf{Exact forward and backward recovery.} With supplied exact costates, discrepancies from serial TTT remain at rounding precision. Replacing only the adapted transpose preserves the outputs but changes the reverse gradients. Errors are maxima over coordinates, tokens, and three seeds.}
\label{fig:toy}
\end{figure}

\paragraph{Results and interpretation.}
Maximum output, hidden-costate, and first-matrix gradient discrepancies are $1.18\times10^{-16}$, $2.22\times10^{-16}$, and $6.66\times10^{-16}$. Using only the slow transpose raises the last two to 0.3757 and 0.6261. Thus the scans recover serial MLP predictions and test-time gradients when proposals are exact. This verifies numerical equivalence; it does not measure learned costate-prediction accuracy or regression generalization.

\subsection{300M language modeling with final-MLP adaptation}
\label{sec:lm}
This study asks whether learning with $Q$ improves the deployed learner over a static Transformer and four-chunk TTT at the same token budget. \textbf{Only the final MLP is adapted} in both adaptive models.

\paragraph{Experimental setup.}
All three models have 21 Transformer layers, model width $d=1024$, two-matrix MLP width $m=4096$, 16 attention heads, a tied 32,768-token vocabulary, and context length 1024. Chunk-TTT updates every 256 tokens using detached, summed gradients; static has no test-time updates.

\ours{} uses a two-block causal Transformer $Q$ with two costate heads, conditioned on the MLP input and observed target embedding. Retention is $\alpha_t=1$, the write cap is $\mu_{\max}=4$, and write strength starts at 0.9; $Q$'s heads start at zero. Chunk-TTT uses the same gate family and initialization on fragment-mean inputs. Cross-entropy trains $P$ and the gates; consistency trains $Q$. Appendix~\ref{sec:protocol} gives the architecture and gradient stops.

All models share slow initialization, ClimbMix data order \citep{climb}, seed 42, a 262,144-token batch, and the $P$ optimizer schedule. Each trains for 112,420 steps (29.47B tokens), exceeding 100 tokens per non-embedding training parameter, including $Q$. A short screen selects half-rate $Q$, with $\lambda=1$. Training compute differs; Appendix~\ref{sec:protocol} gives budgets and tuning details.

Deployment evaluation removes $Q$, scores before updating, and resets at document/context boundaries. Metrics are validation negative log-likelihood (NLL), FineWeb-Edu bits per byte (BPB), LAMBADA word perplexity (PPL), and mean accuracy over Maglev's eight zero-shot tasks \citep{maglev}. Evaluation covers 65,536 validation tokens crossing chunk boundaries, 512 FineWeb-Edu documents \citep{fineweb}, and 27,072 task examples; Appendix~\ref{sec:downstream} gives details.

\begin{figure}[H]\centering
\captionsetup{skip=4pt}
\sbox{\lmtablebox}{%
\begin{minipage}{.57\linewidth}
\fontsize{8.5}{10.5}\selectfont
\renewcommand{\arraystretch}{1}\setlength{\tabcolsep}{2.5pt}
\newcommand{\lmmetric}[2]{{\renewcommand{\arraystretch}{.88}\begin{tabular}[c]{@{}l@{}}#1\\[-.5pt]{\fontsize{7.5}{9}\selectfont #2}\end{tabular}}}
\begin{tabular*}{\linewidth}{@{\extracolsep{\fill}}lrrr@{}}\toprule
Metric & Static & Chunk-TTT (4) & \ours{}\\\midrule
\lmmetric{Negative log-likelihood ($\downarrow$)}{Validation, nats/token}
 & 2.5733 & 2.5726 & \textbf{2.5707}\\\addlinespace[1pt]
\lmmetric{Bits per byte ($\downarrow$)}{FineWeb-Edu}
 & 0.7527 & 0.7530 & \textbf{0.7517}\\\addlinespace[1pt]
\lmmetric{Word perplexity ($\downarrow$)}{LAMBADA}
 & 24.66 & 24.54 & \textbf{23.56}\\\addlinespace[1pt]
\lmmetric{Zero-shot accuracy (\%) ($\uparrow$)}{Mean over eight tasks}
 & 50.77 & 50.81 & \textbf{50.99}\\
\bottomrule\end{tabular*}
\end{minipage}}
\setlength{\lmaxisheight}{2.2cm}
\newcommand{\lmplot}{%
\begin{tikzpicture}
\begin{axis}[paperaxis,width=3.8cm,height=\lmaxisheight,
  xlabel={Training steps (thousands)},ylabel={$\Delta\mathrm{NLL}$},
  xmin=0,xmax=115,xtick={0,50,100},
  ymin=-.019,ymax=.025,ytick={-.01,0,.01,.02},scaled y ticks=false,
  yticklabel style={/pgf/number format/fixed,/pgf/number format/precision=2},
  legend style={at={(0,1)},yshift=4pt,anchor=south west,draw=none,
    font=\fontsize{8}{9.5}\selectfont,legend columns=2,inner sep=0pt,
    /tikz/every even column/.append style={column sep=4pt}},legend cell align=left]
\addplot[black!35,densely dashed,line width=.6pt] coordinates {(0,0) (115,0)};
\addplot[staticline] coordinates {(2.048,-0.01538590342) (4.096,-0.00555177033) (8.192,-0.00540212542) (16.384,-0.00432087854) (32.768,-0.00435915962) (49.152,-0.00506940112) (65.536,-0.00271229073) (81.920,-0.00391671434) (98.304,-0.00238541886) (112.420,-0.00274385884)};
\addplot[chunkline] coordinates {(2.048,0.02165774256) (4.096,0.01182670891) (8.192,0.00242083520) (16.384,0.00148097426) (32.768,-0.00183742866) (34.816,-0.00335489213) (36.864,-0.00488639995) (38.912,-0.00124735013) (40.960,-0.00319331512) (43.008,-0.00539365783) (45.056,-0.00435817614) (47.104,-0.00528803468) (49.152,-0.00435410812) (51.200,-0.01251285151) (53.248,-0.00619864836) (55.296,-0.00271657109) (57.344,0.00160649046) (59.392,-0.00448065251) (61.440,0.00011101738) (63.488,-0.00369065255) (65.536,-0.00351126492) (67.584,-0.00096481293) (69.632,-0.00111426786) (71.680,-0.00275876373) (73.728,-0.00085667893) (75.776,-0.00293827429) (77.824,-0.00452214852) (79.872,-0.00436411425) (81.920,-0.00081847981) (83.968,-0.00352684036) (86.016,-0.00334544480) (88.064,-0.00158905238) (90.112,-0.00165835768) (92.160,-0.00342526287) (94.208,-0.00131703913) (96.256,-0.00013732910) (98.304,-0.00227257237) (100.352,-0.00164186582) (102.400,-0.00173895434) (104.448,-0.00284063444) (106.496,-0.00142465532) (108.544,-0.00185449421) (110.592,-0.00209880248) (112.420,-0.00192119554)};
\legend{,Static,Chunk-TTT (4)}
\end{axis}
\end{tikzpicture}}
\sbox{\lmplotbox}{\lmplot}
\addtolength{\lmaxisheight}{\dimexpr\ht\lmtablebox+\dp\lmtablebox-\ht\lmplotbox-\dp\lmplotbox\relax}
\sbox{\lmplotbox}{\lmplot}
\typeout{GRADLEV-PANELS: plot=\the\dimexpr\ht\lmplotbox+\dp\lmplotbox\relax; table=\the\dimexpr\ht\lmtablebox+\dp\lmtablebox\relax}
\begin{minipage}[t]{.40\linewidth}\vspace{0pt}\centering
\usebox{\lmplotbox}\par
\caption{\textbf{Deployment NLL gap.} \ours{} minus control; negative is better. Recorded GB300 scores, without smoothing.}
\label{fig:learning300}
\end{minipage}\hfill
\begin{minipage}[t]{.57\linewidth}\vspace{0pt}\centering
\usebox{\lmtablebox}\par
\captionsetup[table]{position=bottom,skip=4pt}
\captionof{table}{\textbf{300M final-MLP results.} B300 deployment after 29.47B training tokens. Bold marks best point estimates; task-level scores and uncertainty are in Appendix~\ref{sec:downstream}.}
\label{tab:main-results}
\end{minipage}
\end{figure}

\paragraph{Results and observations.}
\ours{} shows slight improvements in likelihood and mean zero-shot accuracy, probably due to the limited richness from adapting only the last MLP layer. Its final NLL is lower by 0.001856 than chunk-TTT and by 0.002617 than static; mean accuracy is 50.99\%, versus 50.81\% and 50.77\%, respectively. \ours{} leads chunk-TTT and static at every shared checkpoint from step 65,536 onward (Figure~\ref{fig:learning300}).

\section{Limitations and future work}
This work is a preliminary study of costate prediction as a mechanism for parallelizing token-level test-time training. Its primary contribution is the formulation of this approach; the empirical gains are modest, and the current experiments do not establish its effectiveness at larger scales. Demonstrating consistent benefits across model sizes, adaptation depths, and tasks remains an important direction for future work.

Several architectural questions also remain open. Extending adaptation to deeper networks requires a scalable predictor $Q$ that maintains accurate costate predictions across layers while preserving the computational benefits of parallel training. Sharing parameters or intermediate representations between $P$ and $Q$ may help reduce this overhead, but the resulting trade-offs between prediction accuracy, training cost, and optimization stability require further study. Another direction is to restrict gradient-based adaptation to a small set of parameters, which need not be full weight matrices. Determining which parameters to adapt, and how to design architectures that make effective use of these updates, could broaden the practical scope of \ours{}. Addressing these questions is necessary to assess whether the approach can support effective test-time learning in production-scale models.

\clearpage\appendix
\section{Worked example: an online MLP}
\label{sec:mlp-example}
This section derives the online MLP updates and their parallel forward and reverse computations.

Let $x_t\in\mathbb R^d$ be the input, $m$ the hidden width, and $\sigma$ an elementwise activation. The matrices $W_{1,t}\in\mathbb R^{m\times d}$ and $W_{2,t}\in\mathbb R^{d\times m}$ start at $W_{1,0}$ and $W_{2,0}$. Define the preactivation $z_t$, hidden activation $h_t$, and output $y_t$ by
\begin{equation}
 z_t=W_{1,t}x_t,\qquad h_t=\sigma(z_t),\qquad y_t=W_{2,t}h_t.
 \label{eq:online-forward}
\end{equation}
The costates $g_t^y=\nabla_y\ell_t(y_t)$ and $g_t^z=\nabla_z\ell_t(y_t)$ define the reverse and simultaneous writes:
\begin{align}
 g_t^z&=\sigma'(z_t)\odot W_{2,t}^\top g_t^y,
 \label{eq:online-reverse}\\
 W_{1,t+1}&=W_{1,0}+\alpha_t(W_{1,t}-W_{1,0})
              -\frac{\mu_t}{d}g_t^zx_t^\top,\nonumber\\
 W_{2,t+1}&=W_{2,0}+\alpha_t(W_{2,t}-W_{2,0})
              -\frac{\mu_t}{m}g_t^yh_t^\top.
 \label{eq:online-update}
\end{align}
Here $\odot$ is elementwise multiplication, $\alpha_t\in[0,1]$ is retention, and $\mu_t\geq0$ is the shared write coefficient, divided by each map's input width. Both gradients use pre-update weights.

Given costate proposals $\widehat g_t^z\in\mathbb R^m$ and $\widehat g_t^y\in\mathbb R^d$, the forward processes the layers in order and evaluates each recurrence by an exclusive token scan:
\begin{align}
 \widehat\Delta_{1,t+1}
   &=\alpha_t\widehat\Delta_{1,t}-\frac{\mu_t}{d}\widehat g_t^zx_t^\top,
 &\widehat z_t&=(W_{1,0}+\widehat\Delta_{1,t})x_t,\label{eq:scan1}\\
 \widehat\Delta_{2,t+1}
   &=\alpha_t\widehat\Delta_{2,t}-\frac{\mu_t}{m}\widehat g_t^y\widehat h_t^\top,
 &\widehat h_t&=\sigma(\widehat z_t),\label{eq:scan2}\\
 &&\widehat y_t&=(W_{2,0}+\widehat\Delta_{2,t})\widehat h_t.
 \label{eq:proposed}
\end{align}
Both corrections start at zero. The reverse supplies targets
\begin{equation}
\begin{aligned}
 \widetilde g_t^y&=\nabla_y\ell_t(\widehat y_t),\\
 \widetilde g_t^z&=\sigma'(\widehat z_t)\odot
 \left(W_{2,0}^\top\widetilde g_t^y+
              \widehat\Delta_{2,t}^\top\widetilde g_t^y\right).
\end{aligned}\label{eq:reconstructed-reverse}
\end{equation}
The transpose scan uses historical $\widehat g_i^y$ to reverse the same matrices as the forward. To reach earlier adapted layers, continue with $(W_{1,0}+\widehat\Delta_{1,t})^\top\widetilde g_t^z$; the final-MLP learner needs no further read.

The implemented MLP consistency loss is
\begin{equation}
\mathcal L_{\mathrm{cons}}=\frac1{2T}\sum_{t=0}^{T-1}\left[
 \frac{\|\widehat g_t^z-\operatorname{sg}(\widetilde g_t^z)\|_2^2}{m}
 +\frac{\|\widehat g_t^y-\operatorname{sg}(\widetilde g_t^y)\|_2^2}{d}
\right].\label{eq:consistency}
\end{equation}
Use $\lambda=1$ in Equation~\ref{eq:general-objective}. Deployment writes true costates after each scored target.

\paragraph{Language-model loss.}
\label{sec:normalization}
For the final MLP, let $u_t\in\mathbb R^d$ be the residual input and $E\in\mathbb R^{V\times d}$ the tied embedding/output matrix for vocabulary size $V$. With root mean square normalization (RMSNorm), the token loss is
\begin{equation}
 \ell_t(y)=-\log\left[\operatorname{softmax}\!\left(E\operatorname{RMSNorm}(u_t+y)\right)\right]_{s_{t+1}}.
 \label{eq:token-loss}
\end{equation}
Differentiate this loss through the head, normalization, and residual to obtain $g_t^y$, before batch or sequence averaging. The Gaussian error linear unit (GELU) derivative and adapted transpose then give $g_t^z$ by Equation~\ref{eq:reconstructed-reverse}.

\section{Synthetic MLP protocol}
\label{sec:toy-protocol}
This section specifies the numerical diagnostic in Section~\ref{sec:toy}. For seeds 17, 42, and 123, initialize the slow matrices with independent centered Gaussian entries of standard deviation $0.5/\sqrt4$ for $W_{1,0}$ and $0.5/\sqrt7$ for $W_{2,0}$. Independent vectors $v_\alpha,v_\mu\in\mathbb R^4$ have $\mathcal N(0,0.25^2)$ entries. Set $\alpha_t=\operatorname{sigmoid}(x_t^\top v_\alpha+1.5)$, $\mu_t=\operatorname{sigmoid}(x_t^\top v_\mu-0.2)$, and matrix step sizes $\mu_t/4$ and $\mu_t/7$. Corrections start at zero; both matrices update simultaneously with centered decay.

Independent serial autograd supplies the exact costates to a single parallel reconstruction using explicit causal sums in FP64. No serial hidden states are reused. The control changes only the backward transpose, so its output error is identical to the full construction. Errors in Figure~\ref{fig:toy} are maxima over coordinates, 32 token positions, and three seeds. This is a numerical consistency check, with no learned $Q$ or refinement loop.

\section{Forward and transpose scan reads}
\label{sec:scans}
This section makes explicit how the forward and backward reuse the same historical writes. For current query vectors $a_t\in\mathbb R^d$ and $b_t\in\mathbb R^m$, Equation~\ref{eq:general-expanded} gives
\begin{equation}
\begin{aligned}
 \widehat\Delta_t a_t
 &=-\sum_{i<t}\eta_i\left(\prod_{j=i+1}^{t-1}\alpha_j\right)
       \widehat g_i(x_i^\top a_t),\\
 \widehat\Delta_t^\top b_t
 &=-\sum_{i<t}\eta_i\left(\prod_{j=i+1}^{t-1}\alpha_j\right)
       x_i(\widehat g_i^\top b_t).
\end{aligned}\label{eq:scan-reads}
\end{equation}
The forward queries with $a_t=x_t$; the backward queries with $b_t=\widetilde g_t$. Historical values remain the proposals $\widehat g_i$ in both cases. Each read excludes the current write and all writes before the most recent document or context reset.

\section{What approximate consistency guarantees}
\label{sec:consistency-bound}
This section distinguishes exact recovery from stability to prediction errors. Theorem~\ref{thm:consistency} requires no contraction, but causality alone cannot bound trajectory error by a small consistency residual. For example, consider two scalar token blocks and the causal target map $F(G_0,G_1)=(0,\kappa G_0)$, where $\kappa\in\mathbb R$. Its unique fixed point is $G^\star=(0,0)$. For any $\varepsilon\in\mathbb R$, the proposal $\widehat G=(\varepsilon,\kappa\varepsilon)$ satisfies
\[
 \|\widehat G-F(\widehat G)\|_2=|\varepsilon|,
 \qquad
 \|\widehat G-G^\star\|_2=\sqrt{1+\kappa^2}\,|\varepsilon|.
\]
Thus an arbitrarily small residual can coexist with a large trajectory error when amplification is large. This example does not assert that the trained model is unstable; it explains why exact consistency alone does not establish accuracy at nonzero residuals.

\section{Language-model training details}
\label{sec:protocol}
This section records the architecture, training recipe, and numerical checks for the 300M experiments.

\paragraph{Architecture and Q conditioning.}
The backbone uses RMSNorm \citep{rmsnorm}, rotary positions, tanh-approximate GELU, a tied embedding/output matrix $E\in\mathbb R^{32768\times d}$, and context length 1024. The hidden and MLP widths are $(d,m)=(1024,4096)$. Let $x_t\in\mathbb R^d$ be the normalized input to the final MLP. Q receives $\operatorname{sg}(x_t)+\operatorname{RMSNorm}(\operatorname{sg}(E_{s_{t+1}}))$, using unit scale and stabilizer $10^{-6}$ for the target normalization. Its two causal Transformer blocks, final normalization, and two linear heads produce vectors $a_t^z\in\mathbb R^m$ and $a_t^y\in\mathbb R^d$. The proposals are
\begin{equation}
 \widehat g_t^z=\sigma'\!\left(\operatorname{sg}(W_{1,0}x_t)\right)\odot a_t^z,
 \qquad \widehat g_t^y=a_t^y.
\end{equation}
Compute $W_{1,0}x_t$ once and reuse it in P. Q is causal and document-isolated; target-conditioned proposals affect only later predictions.

\paragraph{Initialization and gradients.}
For fixed retention, a learned vector $w\in\mathbb R^d$ and scalar bias $b$ define the write strength and the two MLP step sizes:
\begin{equation}
 \mu_t=4\operatorname{sigmoid}\!\left(w^\top\operatorname{sg}(x_t)+b\right),
 \qquad \eta_{1,t}=\mu_t/d,\quad\eta_{2,t}=\mu_t/m.
 \label{eq:step-gate}
\end{equation}
The gate starts at $\mu_t=0.9$ with $w=0$ and $b=\log(0.9/3.1)$. Retention is one. Zero-initialized $Q$ heads make the first parallel forward static; native deployment already updates. The derivative-conditioning factor above is detached. Historical write inputs, scan costates, $Q$ inputs, and reverse targets are detached; read and gate derivatives remain.

\begin{table}[!ht]\centering
\caption{\textbf{300M final-MLP parameter and token budgets.} Counts exclude the tied embedding/output matrix. Training counts include Q and determine the reported token-to-parameter ratios.}
\label{tab:budgets}
\tableformat\begin{tabular*}{\linewidth}{@{\extracolsep{\fill}}lrrrr@{}}\toprule
Model & Deployed (M) & Training (M) & Tokens (B) & Tokens / param.\\\midrule
Static & 264.2852 & 264.2852 & 29.4702 & $111.5092\times$\\
Chunk-TTT (4) & 264.2862 & 264.2862 & 29.4702 & $111.5088\times$\\
\ours{} & 264.2862 & 294.7000 & 29.4702 & $100.0008\times$\\
\bottomrule\end{tabular*}
\end{table}

\paragraph{Optimization and packing.}
Each update averages 262,144 scored tokens: 16 graphics processing units (GPUs), two 1024-token sequences per GPU, and eight accumulation steps. Adam with decoupled weight decay (AdamW) uses peak rate $3\times10^{-4}$, coefficients $(0.9,0.95)$, epsilon $10^{-8}$, matrix decay 0.1, zero vector decay, and joint gradient-norm clipping at one. After 128 warmup steps, cosine decay reaches 10\% of the peak at step 112,420. \ours{} uses half-rate Q and consistency weight $\lambda=1$. The $Q$ multiplier applies to its optimizer update and AdamW decay; optimizer decay is separate from centered test-time decay.

Pack documents from rank-strided parquet row groups; reset fast state at document and packed-row boundaries. Hold out the final sorted shard.

\paragraph{Chunk comparator.}
A preliminary short screen selected four fixed 256-token chunks, unit retention, and the capped write gate initialized to 0.9. The gate uses the stopped mean MLP input of the current final-document fragment. At each boundary, both matrices receive the fragment's summed gradients, normalized by matrix input width and evaluated at the same incoming weights. Document resets clear memory, so only the final document can write into the next chunk. Costates and write inputs are detached; gates and later reads retain outer derivatives. Only \ours{} uses \mbox{a prefiller and consistency objective.}

\paragraph{Numerical implementation.}
Flash Linear Attention (FLA) provides the training and recurrent scans \citep{fla,gla}. Writes are shifted to implement exclusive reads and masked at document boundaries. Autocasting uses the 16-bit brain floating-point format (BF16), with 32-bit floating-point (FP32) parameters and fast states. The two MLP states require $2md$ FP32 entries per sequence, or 32 mebibytes (MiB) at this scale. Curves use GB300 evaluations; final scores use B300 replays. Their endpoint NLLs differ by at most $1.02\times10^{-4}$, so the two sources are reported separately.

\paragraph{Prefiller-rate selection.}
The screen compares $Q/P$ learning-rate ratios $1/4$, $1/2$, and $3/4$ using mean deployment NLL at steps 1024 and 2048. Their means are 3.571559, 3.571861, and 3.581692. The prespecified rule selects half-rate $Q$ when it is within 0.001 of the best score. All prefixes use the full 112,420-step schedule; only the selected setting continues. Discarded prefixes are additional tuning compute, and no full-budget unit-rate $Q$ control is available.

\section{Downstream evaluation}
\label{sec:downstream}
This section defines the deployment metrics and evaluation protocol for Table~\ref{tab:main-results}.

\begin{table}[!ht]\centering
\caption{\textbf{300M final-MLP results after 29.47B training tokens.} All models use the same B300 evaluation runtime. Metrics retain their original precision; bold marks the best value in each column.}
\label{tab:downstream-full}
\fontsize{8.5}{10.5}\selectfont
\renewcommand{\arraystretch}{1.18}\setlength{\tabcolsep}{2pt}
\begin{tabular*}{\linewidth}{@{\extracolsep{\fill}}l*{12}{r}@{}}\toprule
& \multicolumn{3}{c}{\textbf{Likelihood} ($\downarrow$)} & \multicolumn{9}{c}{Zero-shot accuracy (\%) ($\uparrow$)}\\
\cmidrule(lr){2-4}\cmidrule(l){5-13}
Model & \shortstack{\textbf{Valid.}\\\textbf{NLL}} & \shortstack{\textbf{FineWeb}\\\textbf{BPB}} & \shortstack{\textbf{LAMB.}\\\textbf{PPL}} & LAMB. & PIQA & Hella. & Wino. & ARC-E & ARC-C & SIQA & BoolQ & \textbf{Mean}\\\midrule
Static & 2.573312 & 0.752708 & 24.661 & 37.69 & \textbf{72.52} & \textbf{47.84} & \textbf{53.99} & 65.49 & \textbf{33.45} & \textbf{39.87} & 55.35 & 50.77\\
Chunk-TTT (4) & 2.572551 & 0.752980 & 24.538 & 37.40 & 72.09 & 47.73 & 53.91 & 65.19 & 31.83 & 38.18 & \textbf{60.18} & 50.81\\
\ours{} & \textbf{2.570695} & \textbf{0.751658} & \textbf{23.564} & \textbf{38.04} & 72.14 & 47.75 & 53.28 & \textbf{65.53} & 32.00 & 39.41 & 59.82 & \textbf{50.99}\\
\bottomrule\end{tabular*}
\par\vspace{4pt}
\begin{minipage}{\linewidth}\fontsize{8}{9.5}\selectfont
NLL uses the primary validation set; FineWeb-Edu BPB and LAMBADA word PPL use separate corpora. LAMBADA (LAMB.); Physical Interaction: Question Answering (PIQA); HellaSwag (Hella.); WinoGrande (Wino.); AI2 Reasoning Challenge (ARC), Easy (-E) and Challenge (-C); SocialIQA (SIQA). Mean averages eight task accuracies (27,072 examples per model). HellaSwag and ARC-C use length-normalized scores.
\end{minipage}
\end{table}

\paragraph{Experimental setup.}
The suite follows Maglev's task selection \citep{maglev}: LAMBADA \citep{lambada}, PIQA \citep{piqa}, HellaSwag \citep{hellaswag}, WinoGrande \citep{winogrande}, ARC-Easy and ARC-Challenge \citep{arc}, SocialIQA \citep{socialiqa}, and BoolQ \citep{boolq}. The mean equally weights eight task accuracies.

Each example starts with zero fast state and a fresh cache. Score each token before updating. Candidates use independent prompt-state copies; gold labels are consulted only after scoring. Choice scores sum continuation log-probabilities, normalized by token count for HellaSwag and ARC-Challenge. WinoGrande uses cloze completion. LAMBADA requires greedy agreement with every reference final-word token, without checking the following word boundary.

Chunk boundaries remain at 256 tokens. Many benchmark requests end before an update affects scoring, limiting evidence about repeated chunk adaptation; 1024-token NLL evaluations cross actual boundaries.

\paragraph{Results and uncertainty.}
Task-level outcomes in Table~\ref{tab:downstream-full} are mixed. Paired 95\% intervals for \ours{}'s mean-accuracy difference are $[-0.454,0.808]$ percentage points against chunk-TTT and $[-0.421,0.862]$ against static. Both include zero and omit training-seed variability. Validation and FineWeb-Edu scores were used during development, so neither provides fresh confirmation.

\paragraph{300M likelihood metrics and uncertainty.}
Primary NLL averages 65,536 validation tokens. FineWeb-Edu uses 512 documents, 563,229 scored tokens, and 2,764,147 original bytes in 8-bit Unicode Transformation Format (UTF-8); bits per byte divides total negative log-likelihood by the byte count times $\log 2$. Non-overlapping blocks contain at most 1024 targets, start with fresh states and a context-only beginning-of-sequence (BOS) token, and score every text token once. Full-document exact and normalized training matches are excluded; partial or semantic overlap may remain. LAMBADA word perplexity exponentiates the mean final-word loss over 5153 examples, summing subword losses before averaging.

Accuracy intervals use 10,000 paired bootstrap draws (seed 20260917), resampling examples within each task and equally averaging paired task differences. Limits are the 2.5th and 97.5th percentiles. They exclude training-seed and recipe-selection uncertainty.

\end{document}